\documentclass[letterpaper, 10 pt, conference]{ieeeconf}  

\IEEEoverridecommandlockouts                              

\usepackage{amsmath}    
\usepackage{amsfonts}   
\usepackage{amssymb}    
\usepackage{mathtools}  
\usepackage{bm}         

\usepackage{amsthm}
\usepackage{subcaption}

\usepackage[utf8]{inputenc} 
\usepackage[T1]{fontenc}    
\usepackage{hyperref}       
\usepackage{url}            
\usepackage{booktabs}       
\usepackage{nicefrac}       
\usepackage{microtype}      
\usepackage{xcolor}         

\usepackage{soul}  
\usepackage{todonotes}
\usepackage{tcolorbox}

\usepackage{cite}

\DeclareMathOperator*{\ssat}{ssat}
\DeclareMathOperator*{\blkdiag}{blkdiag}

\renewcommand{\vec}[1]{\boldsymbol{#1}}
\newcommand{\norm}[1]{\left\lVert#1\right\rVert}
\newcommand{\umax}{u_{\text{max}}}

\newcommand{\nR}[1]{\mathbb{R}^{#1}}		

\theoremstyle{plain} 
\newtheorem{assumption}{Assumption}     
\newtheorem{proposition}{Proposition}

\theoremstyle{definition}

\theoremstyle{remark}

\newcommand{\wref}{\vec{w}_{\text{ref}}}
\newcommand{\alloc}{M}
\newcommand{\diff}[1]{\frac{d}{d#1}}

\newcommand{\difftwo}[1]{\frac{d^2}{d#1^2}}

\newcommand{\bak}{\text{bak}}
\newcommand{\recon}{\text{rec}}
\renewcommand{\th}{\text{th}}

\title{\LARGE \bf
Multi-Agent Transportation of Free-Flyers in Microgravity Via Pushing Interaction Under Human-in-the-Loop Control
}

\author{
Gregorio Marchesini$^{*}$,
Nicola De Carli$^{*}$,
Sihyun Cho$^{\dagger}$,
Youngkyoung Kong$^{\dagger}$,\\
Elias Krantz$^{*}$,
Mani Hemanth Dhullipalla$^{*}$,
Dimos V. Dimarogonas$^{*}$,
and H. Jin Kim$^{\dagger}$%
\thanks{$^{*}$
KTH Royal Institute of Technology, Stockholm, Sweden.}%
\thanks{$^{\dagger}$ Seoul National University, Seoul, Korea.}
\thanks{This work was supported in part by the National Research Foundation of Korea
(NRF) grant funded by the Korea government (MSIT) (RS-2024-00436984), and in part by the Wallenberg AI, Autonomous Systems and Software Program (WASP) funded by the Knut and Alice Wallenberg (KAW) Foundation, the Swedish Research Council (VR), and Digital Futures.
}
}
\begin{document}

\maketitle
\thispagestyle{empty}
\pagestyle{empty}

\begin{abstract}
We propose a safety-critical framework for the cooperative transportation of passive targets in microgravity, where a team of chaser robots acts through unilateral pushing contacts to track a human-provided desired twist while ensuring safe target motion. The pushing-only nature of the interaction introduces sparse, configuration-dependent actuation constraints requiring chasers to physically relocate on the target body when the desired pushing allocation changes. To address these challenges, we formulate a delay-aware feedback control architecture leveraging Control Lyapunov Function (CLF) and Control Barrier Function (CBF) constraints within a mixed-integer thrust allocation program to enforce stability and safety of the target, respectively. The proposed framework enables reference tracking while guaranteeing obstacle avoidance with a circular obstacle despite intermittent control authority, providing a foundation for human-supervised cooperative transportation of free-flyers in space environments. The proposed framework is validated through Gazebo simulations.
\end{abstract}
\section{Introduction}
There is increasing interest in space robotic systems for autonomous and remotely operated on-orbit missions, including inspection, servicing, assembly, manipulation, and docking \cite{li2022survey,rodrigues2025dynamics,jang2026demonstration,verhagen2026validation}. Among these,
several recent space robotics applications, ranging from on-orbit construction of large-scale structures to space debris removal \cite{li2022survey,rodrigues2025dynamics}, require bringing the state of a passive or poorly actuated target (e.g., a decommissioned spacecraft or a component of a larger structure) from one state to another along a desired trajectory. Several works have focused on using a single chaser spacecraft equipped with a grasping mechanism to transport the target along such a trajectory. However, monolithic robotic systems often face limitations in adaptability, scalability, and mission-specific flexibility. In contrast, multi-agent systems (MAS) are increasingly viewed as a superior alternative, owing to their inherent robustness to individual failures and their ability to provide higher collective thrust for demanding transportation tasks \cite{banerjee2023resiliency}.

\begin{figure}
    \centering
    \includegraphics[width=0.85\linewidth]{images/intro_image_paint.png}
    \caption{Target-chasers system: a swarm of chasers collaboratively pushes a passive target to follow a human-desired velocity twist. Video : \url{https://youtu.be/W_9vIz7-wOk}}
    \label{fig:high level}
\end{figure}

Several works have demonstrated the applicability of MAS to collaborative target transportation using, for example, tethered  \cite{phodapol2024collaborative}, grasping  \cite{spacecraft_pushing}, impulsive-push  \cite{verhagen2025collaborative}, and soft-pushing interaction \cite{lee2025switchingcontrolunderactuatedmultichannel,gao2024adaptive,gao2023autonomous}. Common to these methods is that the passive target is treated as an unactuated system, whose actuation is virtually enabled by a set of chasers imparting coordinated wrenches on it.

We consider the case in which a passive target is transported in space via pushing interaction by a set of chasers operating in the proximity of a space station (e.g., the International Space Station (ISS)). Pushing interaction is particularly appealing for its simplicity, as it requires no gripping mechanism. We further consider that a human operator provides high-level decision-making in the form of twist commands to navigate the target (e.g., to assemble a given structure), effectively instating a \textit{human-in-the-loop} system (HITL). This enables the application of chasers with limited computational resources, resulting in a cheaper and fault-tolerant system. Within these settings, safety remains a central issue left unaddressed by previous work. Indeed, as chasers periodically relocate themselves on the target to impart the required wrench for tracking twist commands, they intermittently leave the target on a free-flying course. It is during this free-flying transition that safety must be ensured to avoid collisions with other objects in the environment.

Inspired by \cite{lee2025switchingcontrolunderactuatedmultichannel}, this paper proposes a framework for the collaborative transportation of free-floating targets in microgravity. By leveraging pushing-only contact from a swarm of space robots, the proposed approach enables agents to collaboratively transport a passive target while tracking a human-provided velocity reference (see Fig.~\ref{fig:high level}), and ensures collision avoidance with obstacles in the environment. Compared to \cite{lee2025switchingcontrolunderactuatedmultichannel}, we consider references provided by a human operator and introduce an additional safety objective beyond stabilizing the human-given reference.

\subsubsection*{Notation}
Bold letters denote column vectors. The sets $\mathbb{R}$, and $\mathbb{R}_{\geq 0}$ represent real and non-negative real numbers, respectively. Given matrices $A_1,\ldots,A_m$, $\blkdiag(A_1,\ldots,A_m)$ is the block-diagonal matrix with blocks $A_1,\ldots,A_m$. The set $\mathbb{S}^{n} := \{\vec{x} \in \mathbb{R}^{n+1} \mid \| \vec{x} \|=1 \}$, is the unit sphere in $\mathbb{R}^{n+1}$. For a set of $m$ vectors $B = \{\vec{b}_i\}_{i=1}^m \subset \mathbb{R}^{n}$, the \textit{positive cone} of $B$ is $\mathcal{C}(B) = \{ \vec{v} \mid \vec{v} = \sum_{i=1}^{m} \lambda_i \vec{b}_i,\; \lambda_i \geq 0 \}$, and the \textit{strict positive cone} $\mathcal{C}_{\text{st}}(B) = \{ \vec{v} \mid \vec{v} = \sum_{i=1}^m \lambda_i \vec{b}_i,\; \lambda_i > 0\}$. The set $B$ \textit{positively spans} the set $A$ if $A \subseteq \mathcal{C}(B)$. For a continuously differentiable function $g: \mathbb{R}^{n} \rightarrow \mathbb{R}^{m}$, we denote Jacobian as $\diff{\vec{x}}g \in \mathbb{R}^{m\times n}$. When $g$ is scalar function, i.e., $g: \mathbb{R}^{n} \rightarrow \mathbb{R}$, then the Jacobian is the \textit{gradient} of $g$.

\section{Preliminaries}

We consider the problem of controlling a passive free-floating \textbf{target} body with mass leveraging a team of \textbf{chasers}. The chasers can collaboratively exert \textit{push-only} contact on the target (Sec. \ref{sec:actuation model}) to regulate its motion according to a desired velocity command provided by a human operator (see Fig. \ref{fig:pushing object}). The target's dynamics are defined by
\begin{subequations}\label{eq:main dynamics}
\begin{align}
&\diff{t} \vec{p}      = \vec{v}, \; &&\diff{t} \vec{q} = \frac{1}{2}\Theta(\vec{q})\vec{\omega}, \\
&\diff{t} \vec{v}      = \frac{1}{m}R(\vec{q})\vec{f},\; &&\diff{t} \vec{\omega} = J^{-1}\left(-\vec{\omega}\times J\vec{\omega}+\vec{\tau}\right),
\end{align}
\end{subequations}
where $\vec{p},\vec{v} \in \mathbb{R}^{3}$ are the target's inertial position and velocity, $\vec{q} \in \mathbb{S}^3$ is the quaternion representing the orientation of the body frame $\mathcal{B}$ with respect to the inertial frame $\mathcal{I}$ (Fig.~\ref{fig:high level}), and $\vec{\omega} \in \mathbb{R}^3$ is the target's body-frame angular velocity. The vectors $\vec{f},\vec{\tau} \in \mathbb{R}^{3}$ are the body-frame force and torque exerted by the chasers. 
The matrix $J \in \mathbb{R}^{3\times 3}$ is the diagonal body-frame inertia matrix,$m>0$ is the traget mass, and $R(\vec{q}) \in SO(3)$ is the body-to-inertial rotation matrix associated with $\vec{q}$ (see, e.g., \cite[Eq.~2.125]{markley2014correction}). Finally $\Theta(\vec{q})\in \mathbb{R}^{4\times 3}$ is the quaternion dynamics matrix \cite[Eq.~3.20]{markley2014correction}

Letting $\vec{x} = [\vec{p}^\top, \vec{q}^\top]^\top \in \mathbb{R}^{3} \times \mathbb{S}^3$ denote the target \textit{pose}, $\vec{\eta} = [ \vec{v}^\top, \vec{\omega}^\top]^\top \in \mathbb{R}^6$ its \textit{twist}, $\vec{\xi} = [\vec{x}^\top, \vec{\eta}^\top]^\top$ its state, and $\vec{w} = [\vec{f}^\top, \vec{\tau}^\top]^\top \in \mathbb{R}^{6}$ its \textit{wrench}, we express \eqref{eq:main dynamics} in input affine form
\begin{equation}\label{eq:systems dynamics eu}
\begin{aligned}
\diff{t} \vec{\xi} = \begin{bmatrix}
    \diff{t} \vec{x} \\
    \diff{t}  \vec{\eta}
\end{bmatrix} = f_{\xi}(\vec{\xi}) + G_{\xi}(\vec{\xi})\vec{w},
\end{aligned}
\end{equation}
where
\begin{equation}
f_{\xi}(\vec{\xi}) := \begin{bmatrix} G_x(\vec{x})\vec{\eta}\\
f_{\eta}(\vec{x},\vec{\eta})
\end{bmatrix}, \; G_{\xi}(\vec{\xi}) := \begin{bmatrix} \vec{0}_6\\
G_{\eta}(\vec{x}, \vec{\eta})
\end{bmatrix},
\end{equation}
such that  
$G_x(\vec{x}) = \blkdiag\left(\vec{I}_3, \tfrac12\Theta(\vec{q})\right) \in \mathbb{R}^{7\times 6}$, 
$G_\eta(\vec{x},\vec{\eta}) = \blkdiag\left( R(\vec{q}), J^{-1} \right)  \in \mathbb{R}^{6\times 6}$, 
and 
$f_\eta(\vec{x},\vec{\eta}) = \begin{bsmallmatrix}
\vec{0}_3\\
-J^{-1}(\vec{\omega}\times J\vec{\omega})
\end{bsmallmatrix} \in \mathbb{R}^{6}$. Note that $G_x(\vec{x})$ is full column rank for all quaternions $\vec{q}\in\mathbb{S}^3$, and $G_{\eta}(\vec{x}, \vec{\eta})$ is always invertible. We assume the target inertial parameters to be known.

Given an initial state $\vec{\xi}$, we denote by $\phi(t, \vec{\xi})$ the \textit{unforced} solution to \eqref{eq:systems dynamics eu} (i.e., with $\vec{w} =\vec{0}$) at time $t$, by the function 
\begin{equation}\label{eq:unforced dynamics}
\phi(t, \vec{\xi}) := \vec{\xi} + \int_0^{t} f_{\xi}(\phi(s, \vec{\xi})) ds.
\end{equation}

\subsection{Actuation Models}\label{sec:actuation model}
We assume the wrench $\vec{w}$ in \eqref{eq:systems dynamics eu} is applied to the target by a swarm of $n_{a}$ chasers, with the goal of tracking a twist reference $\vec{\eta}_d : [0,\infty) \rightarrow \mathbb{R}^6$ fed by a human operator. Namely, each chaser $j$ interacts with the target over one of the $n_{\th}$ \textit{actuation locations}, $n_{\th}\geq n_{a}$, defined as $l_i = (\vec{r}_i,\vec{d}_{i})$, with position $\vec{r}_i \in \mathbb{R}^3$ and unitary direction vector $\vec{d}_i \in \mathbb{S}^2$ expressed in the target frame $\mathcal{B}$. Chaser $j$ can exert a force $\vec{f}_{ij} = \vec{d}_i u_j$ at the $i$-th target location, where $u_j \in \mathbb{R}_{\geq 0}$, $0\leq u_j \leq \umax$ is the thrust magnitude. We additionally define $\alloc_{ij} \in \{0,1\}$ as a binary variable set to $1$ when chaser $j$ acts on location $i$ and $0$ otherwise. By such interaction, chaser $j$ generates a wrench in the body frame given by
$$
\vec{w}_{ij} = M_{ij}\begin{bmatrix} \vec{f}_{ij} \\ \vec{r}_i \times \vec{f}_{ij} \end{bmatrix}=M_{ij}\begin{bmatrix} \vec{d}_i \\ (\vec{r}_i \times  \vec{d}_i)\end{bmatrix} u_{j} := M_{ij}\vec{b}_i u_{j}.
$$
where $\vec{b}_i = \begin{bmatrix} \vec{d}_i^\top & (\vec{r}_i \times  \vec{d}_i)^\top\end{bmatrix}^\top$. Staking the thrust magnitudes as $\vec{u} = [u_1, \ldots u_{n_a}] \in \mathbb{U}$, with $\mathbb{U}:= [0,u_{max}]^{n_a}$, and letting the \textit{allocation} matrix $\alloc \in \mathbb{M}$,
\begin{equation}\label{eq:allocmat_def}
\mathbb{M}
:=
\left\{
\alloc \in \{0,1\}^{n_{\th}\times n_a}
\;\middle|\;
\sum_{i=1}^{n_{\th}}\alloc_{ij}
\overset{\text{(i)}}{=}1,\;
\sum_{j=1}^{n_a}\alloc_{ij}
\overset{\text{(ii)}}{\leq}1
\right\}.
\end{equation}
then the \textit{admissible} wrench $\vec{w}: \mathbb{M} \times \mathbb{U} \rightarrow \mathbb{R}^{6}$ resulting from all the chasers' interactions is given by:
\begin{equation}\label{eq:pushing wrench}
\vec{w}(\alloc, \vec{u}) = \begin{bmatrix}\vec{f}(M,\vec{u})^\top\\ \vec{\tau}(M,\vec{u})^\top\end{bmatrix} = \sum_{i=1}^{n_{\th}} \vec{b}_i \sum_{j=1}^{n_a}\alloc_{ij} u_{j} = B\alloc\vec{u},
\end{equation}
where $B = [\vec{b}_1, \ldots \vec{b}_{n_{\th}}] \in \mathbb{R}^{6 \times n_{\th}}$. In~\eqref{eq:allocmat_def}, (i) assigns each chaser to exactly one actuation location, while (ii) allows at most one chaser per location. We consider the following assumptions.
\begin{assumption}\label{ass:controllability}
    The set $\{\vec{b}_i\}_{i=1}^{n_{\th}}$ positively spans the wrench space $\mathbb{R}^{6}$, i.e., $\mathcal{C}(\{\vec{b}_i\}_{i=1}^{n_{\th}})=\mathbb{R}^{6}$.
\end{assumption}
\begin{assumption}\label{ass:min chasers}
    The number of chasers is $n_{a} \geq 4$.
\end{assumption}

Assumption~\ref{ass:controllability} is a controllability condition that captures the push-only actuation constraint: each chaser can generate only a nonnegative multiple of the wrench direction $\vec{b}_i$ associated with its contact location, so arbitrary wrenches must be obtained as nonnegative combinations of the available directions. This is closely related to the classical \emph{force-closure} condition in robotic manipulation~\cite[Sec.~12]{lynch2017modern}. Assumption~\ref{ass:min chasers}, on the other hand, sets a minimum number of $4$ agents, as this is the minimum number of agents needed to be able to impart a positive acceleration in every direction in $\mathbb{S}^2$ with a positive cone \cite[Thm.~3.8]{davis1954theory}. Namely, by Assumptions~\ref{ass:controllability} and ~\ref{ass:min chasers}, there existence of at least one fixed allocation $\alloc^{\bak}\in\mathbb{M}$, hereafter termed the \emph{backup allocation}, providing a strictly positive minimum linear acceleration in every direction $\vec{e}\in\mathbb{S}^2$. We define this quantity as
\begin{equation}\label{eq:min-max-acc}
\begin{aligned}
a_{\text{max}}(\alloc^{\bak}) :=  &  \max_{\vec{u} \in \mathbb{U}} \min_{\vec{e}\in \mathbb{S}^{2}} \; \vec{e}^\top \vec{f}(M^{\bak},\vec{u})  \frac{1}{m}, \; \\ 
\end{aligned}
\end{equation}
where $\vec{w}(M^{\bak},\vec{u}) = [\vec{f}(M^{\bak},\vec{u})^\top, \vec{\tau}(M^{\bak},\vec{u})^\top]^\top$ from \eqref{eq:pushing wrench}.  For a fixed actuation geometry $B$ and number of chasers $n_a$, the value of $a_{\max}(\alloc^{\bak})$ depends on the selected backup allocation and can be interpreted as a grasp-quality-like measure of the available translational control authority~\cite[Sec.~12.1.7]{lynch2017modern}. Hence, once $\alloc^{\bak}$ is fixed, the chasers can generate an acceleration of at least $a_{\max}(\alloc^{\bak})>0$ along any translational direction without the need to relocate the target to another configuration.

A feasible $\alloc^{\bak}$ can be constructed geometrically. Let $P = [I_3\;0_3]\in\mathbb{R}^{3\times6}$ denote the projection mapping for a wrench vector $\vec{b}_i$ onto the force components. Select three linearly independent projected wrench directions $\{P\tilde{\vec{b}}_i\}_{i=1}^3$ and a fourth direction satisfying $P\hat{\vec{b}}\in\mathcal{C}_{\mathrm{st}}(\{-P\tilde{\vec{b}}_i\}_{i=1}^3)$~\cite[Thm.~3.8]{davis1954theory}. These four directions positively span $\mathbb{R}^3$, while the remaining $n_a-4$ chasers, if any, may be assigned to arbitrary unoccupied actuation locations.

\section{Stabilization and Safety}

\begin{figure}
    \centering
    \includegraphics[width=0.8\linewidth]{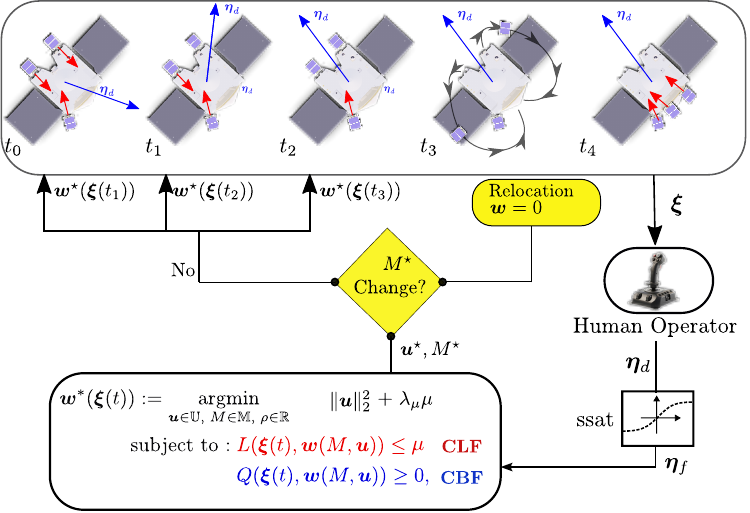}
    \caption{Control architecture of the target-chaser system. The human operator provides a twist command $\vec{\eta}_d$ (filtered through a low-pass filter). The command is fed into a CLF-CBF controller \eqref{eq:cbf-clf base bilinear} from which an optimal allocation and thrust for the chasers is computed. When the optimal allocation changes, chasers relocate, leaving the target unactuated. }
    \label{fig:pushing object}
\end{figure}

We assume that an operator provides a continuous velocity twist signal $\vec{\eta}_d : [0, \infty) \rightarrow \mathbb{R}^6$ to virtually move the target in space. 
The operator may command variations of $\vec{\eta}_d$ that are too rapid for the target to track given its limited actuation authority; moreover, the derivative $\dot{\vec{\eta}}_d$ may not be available. We therefore introduce a filtered reference twist $\vec{\eta}_f$ with bounded rate of change, governed by
\begin{equation}\label{eq:filtered_vel}
    \diff{t} \vec{\eta}_f = -\ssat\left(k_f(\vec{\eta}_f - \vec{\eta}_d), 
    \dot{\eta}_{\max}
    \right)
\end{equation}
where $k_{f} >0$ is the filter gain, $\dot{\eta}_{\max}>0$ is an upper bound on the norm of the filtered reference derivative, and $\ssat:\nR{n}\times \nR{}\to \nR{n}$ denotes a smooth saturation operator 
$\ssat(\vec{\eta}, \eta_{\max}) \coloneqq \eta_{\max} \tanh\left(\frac{\norm{\boldsymbol{\eta}}}{\eta_{\max}}
    \right)
    \frac{\boldsymbol{\eta}}{\norm{\boldsymbol{\eta}}}.$

The objective of the chasers is to apply a wrench on the target to track the reference twist $\vec{\eta}_f$ provided by the human operator while avoiding a spherical obstacle, representing, for example, the space station from which the mission is operated. A key challenge is that changes in the chaser allocation require a finite relocation time during which the target may experience reduced or no control authority. We formalize this as follows.
\begin{assumption}
    There exists a fixed finite $\delta>0$ such that any relocation between two distinct allocations $M_1,M_2\in\mathbb{M}$ requires at most a time $\delta$.
\end{assumption}  
During the interval $\delta$, the target is therefore uncontrolled (i.e., $\vec{w}=0$) and may enter an unsafe configuration leading to an unavoidable collision. In the next section, assuming no reconfiguration time is required, we define a control approach, based on the notion of  Control Lyapunov Functions (CLF) and Control Barrier Functions (CBF) \cite{romdlony2016stabilization}, that stabilizes the target toward the reference while maintaining safety, following the architecture in Fig.~\ref{fig:pushing object}. In Section~\ref{sec:enhanced_safey}, we then show how this safety definition can be extended to account for the relocation time $\delta$, and how a hybrid control architecture that modulates the safety constraints based on the system's current state reduces conservatism.

\subsection{Stabilization}
To track the filtered reference twist $\vec{\eta}_f$, 
we consider the smooth Lyapunov function
\begin{equation}
V(\vec{\xi}) := \frac{1}{2}\| \vec{e}_{\eta} \|^2, \quad  \vec{e}_{\eta}:= \vec{\eta} - \vec{\eta}_f,
\end{equation}
with time derivative (omitting function arguments for brevity)
\begin{equation}\label{eq:lyapunov derivative}
 L(\vec{\xi}, \vec{w}):= \diff{t}{V} = \diff{\vec{\xi}}V^T \diff{t}\vec{\xi} = \vec{e}_{\eta}^\top(f_{\eta} + G_{\eta}\vec{w} - \diff{t} \vec{\eta}_f).
\end{equation}
For a given wrench signal $\vec{w}(t)$ such that $L(\vec{\xi}(t), \vec{w}(t)) \leq - \epsilon$, $\epsilon >0$, then $\vec{e}_{\vec{\eta}}(t)$ is asymptotically stabilized to the origin \cite[Thm. 4.2]{khalil2002nonlinear}. In particular, replacing a reference wrench input $\wref(\vec{\xi}) = G_{\eta}^{-1}(-f_{\eta} + \diff{t} \vec{\eta}_f - K \vec{e}_{\eta})$
with $K >0$, we obtain $L(\vec{\xi}, \wref) = \diff{t}{V} = -K\|\vec{e}_{\eta}\|^2 \leq 0$, which yields exponential convergence of the tracking error,
\(
\|\vec{e}_{\eta}(t)\|
=
\|\vec{e}_{\eta}(0)\|e^{-Kt}
\). Adding and subtracting $\wref$ to $\vec{w}$ in \eqref{eq:lyapunov derivative}, we can thus equivalently write $L(\vec{\xi}, \vec{w})$ as
\begin{equation}
L(\vec{\xi}, \vec{w})= \vec{q}_{V}(\vec{\xi})^\top \vec{w} - m_V(\vec{\xi}),
\end{equation}
where
\begin{equation}
\vec{q}_{V}(\vec{\xi}) := G_{\eta}^\top \vec{e}_{\eta}, \; m_V(\vec{\xi}) := -K \norm{\vec{e}_{\eta}}^2
\end{equation}
\subsection{Safety}
\begin{figure}[ht]
    \centering
    \includegraphics[width=0.6\linewidth]{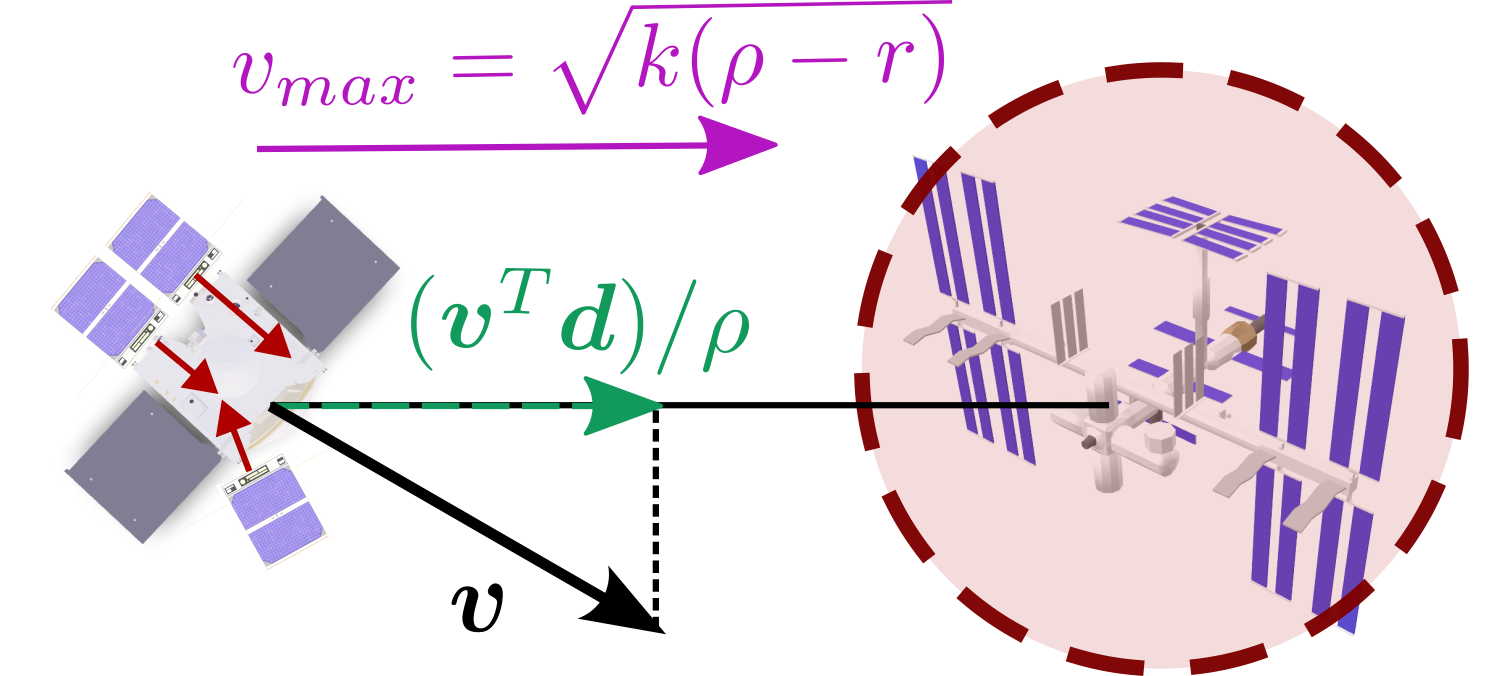}
    \caption{Barrier representation.}
    \label{fig:collision}
\end{figure}

As a safety objective, we aim to avoid a spherical obstacle in the environment with position $\vec{c} \in \mathbb{R}^3$ and radius $r>0$. Namely, let the scalar function
\begin{equation}
\begin{aligned}
h(\vec{\xi}) &= 
\rho(\vec{\xi})-r,  
\end{aligned}
\end{equation}
where $\rho(\vec{\xi}):=\|\vec{d}\|$, $\vec{d}:=\vec{p}-\vec{c}$. The target safe set is given by $\mathcal{H} := \{ \vec{\xi} \mid  h(\vec{\xi}) \geq 0 \}$ (i.e., all states for which the position $\vec{p}$ is outside the obstacle). 

To guarantee safety, we consider the control barrier function
\begin{equation}\label{eq:barrier functions}
b(\vec{\xi}) = \underbrace{\frac{\vec{v}^\top\vec{d}}{\rho}}_{{d h}/{dt}} + \sqrt{k h(\vec{\xi}) + \varepsilon} - \sqrt{\varepsilon},\\
\end{equation}
where $k := 2 a_{\text{max}}(\alloc^{\bak})$, with $a_{\text{max}}(\alloc^{\bak})>0$ as per \eqref{eq:min-max-acc}, and $\varepsilon>0$ a regularization constant. For $\rho\geq r$ (or equivalently $h\geq 0$), the regularization makes $b$ continuously differentiable at $\rho=r$ and provides a conservative approximation, since
\(
\sqrt{kh+\varepsilon}-\sqrt{\varepsilon}
\leq
\sqrt{kh}.
\)
The first term in $b$ is the outward radial velocity, while $\sqrt{k(\alloc^{\bak})h}=\sqrt{2a_{\max}(\alloc^{\bak})h}$ is the largest admissible approach speed from which the target can be brought to rest before reaching the obstacle using the guaranteed maximum outward deceleration~\cite[Sec.~III]{safety_barrier}.
The regularized second term therefore provides a smooth conservative approximation of this braking bound.
Moreover, from \eqref{eq:barrier functions}, the condition $b(\vec{\xi})\geq0$ implies
\[
\diff{t}h
\geq
-\left(
\sqrt{kh+\varepsilon}-\sqrt{\varepsilon}
\right).
\]
Since the right-hand side vanishes at $h=0$, any trajectory starting in $\mathcal{H}$ and satisfying $b(\vec{\xi}(t))\geq0$ remains in $\mathcal{H}$ based on Nagumo's theorem \cite{ames2019control}.
Hence, we define \( \mathcal{B}:=\{\vec{\xi}\in \mathcal{H}\mid b(\vec{\xi})\geq0\}, \) with $\mathcal{B}\subseteq\mathcal{H}$. Assuming $\vec{\xi}(t_0)\in\mathcal{B}$, the condition
\begin{equation}\label{eq:cbf condition}
Q(\vec{\xi}(t),\vec{w}(t))
:=
\diff{\vec{\xi}}b^\top\diff{t}\vec{\xi}(t)
+\kappa b(\vec{\xi}(t))
\geq0,
\end{equation}
for all $t\geq t_0$ and some $\kappa>0$, this guarantees $b(\vec{\xi}(t))\geq0$ for all $t\geq t_0$~\cite{ames2019control}.
Together with the argument above, i.e. $\frac{dh}{dt} \geq 0$ at $h=0$, this ensures $\vec{\xi}(t)\in\mathcal{B}$ for all $t\geq t_0$. Noting that
\begin{equation}
\diff{\vec{\xi}}b^T \diff{t}\vec{\xi} = \frac{\|\vec{v}\|^2}{\rho} - \frac{(\vec{d}^\top\vec{v})^2}{\rho^3} +  \frac{k \cdot \vec{d}^\top\vec{v}}{2\rho \sqrt{kh + \varepsilon}} + \frac{\vec{d}^\top R(\vec{q}) \vec{f}}{\rho m},
\end{equation}
and letting
$$
\begin{aligned}
\vec{q}_{Q}(\vec{\xi}) &:= \begin{bmatrix}\frac{\vec{d}^\top R(\vec{q}) }{\rho m}, 0^T_3\end{bmatrix}^\top,\\
m_{Q}(\vec{\xi})       &:=  -\frac{\|\vec{v}\|^2}{\rho} + \frac{(\vec{d}^\top\vec{v})^2}{\rho^3} -  \frac{k \cdot \vec{d}^\top\vec{v}}{2\rho \sqrt{kh+ \varepsilon}} -  \kappa b(\vec{\xi}),
\end{aligned}
$$ 
the CBF condition can be compactly written as
\begin{equation}
Q(\vec{\xi},\vec{w}) = \vec{q}_{Q}(\vec{\xi})^\top \vec{w} - m_{Q}(\vec{\xi}) \geq 0.
\end{equation}

\subsection{Controller Design}\label{sec:controller design}

Under the actuation model \eqref{eq:pushing wrench}, and neglecting chaser relocation times, the stabilization and safety objectives can be addressed through the following CLF--CBF controller $\vec{w}(\vec{\xi}(t)) = \vec{w}^{\star}(\vec{\xi}(t))$ :
\begin{subequations}\label{eq:cbf-clf base bilinear}
\begin{align}
\vec{w}^{*}(\vec{\xi}(t)) &:= \underset{\vec{u} \in \mathbb{U}, \;  \alloc \in \mathbb{M}, \; \mu \in \mathbb{R}_{\geq0}}{\text{argmin}} \qquad \|\vec{u}\|^2_{2} + \lambda_{\mu}\mu \label{eq:basecost} \\ 
&\qquad\quad\text{subject to :}  \;       L(\vec{\xi}(t),\vec{w}(M,\vec{u})) \leq \mu \label{eq:baseconst:stability}\\
&\qquad\quad\phantom{\text{subject to :}}\; Q(\vec{\xi}(t),\vec{w}(M,\vec{u})) \geq 0,   \label{eq:baseconst:safety}
\end{align}
\end{subequations}
where, $\vec{w}^{*}(\vec{\xi}(t)) = \vec{w}(\alloc^{*}(\vec{\xi}), \vec{u}^{*}(\vec{\xi}))$ should be understood as the optimal wrench computed as per \eqref{eq:pushing wrench}, by the optimal allocation $\alloc^\star(\vec{\xi})$ and thrust $\vec{u}^{*}(\vec{\xi})$, obtained by solving \eqref{eq:cbf-clf base bilinear} at state $\vec{\xi}(t)$. In \eqref{eq:cbf-clf base bilinear} we highlight that: (i) ~\eqref{eq:baseconst:stability} imposes the CLF condition with a non-negative slack variable $\mu$, penalized in~\eqref{eq:basecost} by the weight $\lambda_\mu>0$, 
(ii) constraint \eqref{eq:baseconst:safety} collects the CBF safety constraints, and (iii) the cost function \eqref{eq:basecost} penalizes the control effort and violations of the CLF condition.

At each state $\vec{\xi}$, \eqref{eq:cbf-clf base bilinear} is a mixed integer quadratic program (MIQP), with bilinear constraints since both $Q(\vec{\xi},\vec{w})$ and $L(\vec{\xi},\vec{w})$ are linear in $\vec{w}$, but $\vec{w}(M,\vec{u}) = B\alloc \vec{u}$, which is bilinear in the variables $M$ and $\vec{u}$, as per \eqref{eq:pushing wrench}. McCormick envelopes constraints \cite[Sec. 5.2.1]{belotti2013mixed} are a standard method to replace the bilinear constraints with equivalent linear constraints by introducing auxiliary variables, making \eqref{eq:cbf-clf base bilinear} solvable via branch-and-bound solvers. While such problems are renowned to be computationally hard to solve, we will show in Sec.~\ref{sec:relaxation} how we can efficiently relax the problem to meet real-time computational constraints.

As previously highlighted, the implementation of \eqref{eq:cbf-clf base bilinear} does not consider the relocation time $\delta$. Namely, if for two distinct times $t_2 > t_1$ the optimal allocation obtained by \eqref{eq:cbf-clf base bilinear} changes, i.e., $\alloc^{*}(\vec{\xi}(t_2)) \neq \alloc^{*}(\vec{\xi}(t_1))$, then the chasers must relocate on the target. This then leaves the target uncontrolled for an interval $\delta$ (see Fig~\ref{fig:high level}). Our safety guarantees must therefore be strengthened to ensure collisions are avoided in this interval.
\section{Enhanced Safety Under Allocation Switch}\label{sec:enhanced_safey}
Inspired by \cite{backupcbf,adrian}, we consider a \textit{predictive} safe set $\mathcal{B}^\star \subset \mathcal{B}$ such that:
\begin{equation}
\vec{\xi} \in  \mathcal{B}^\star \; \Rightarrow \;  \phi(\tau, \vec{\xi}) \in  \mathcal{B}, \; \forall \tau \in [0,\delta].
\end{equation}
where $\phi(\tau, \vec{\xi})$ is the unforced solution of the dynamics of the target as per \eqref{eq:unforced dynamics}. We characterize $\mathcal{B}^\star$ as the superlevel set $\mathcal{B}^\star := \{\vec{\xi} \mid b^{\star}(\vec{\xi}) \geq 0\}$ of the predictive CBF
\begin{equation}\label{eq:optimal barrier}
b^{\star}(\vec{\xi}) := \underset{\tau \in [0,\delta]}{\min}\;  b(\phi(\tau, \vec{\xi})).
\end{equation}
The corresponding set of minimizers is $T^\star:= \{\tau^{\star} \mid  b(\phi(\tau^{\star}, \vec{\xi}) = b^{\star}(\vec{\xi}) \}$.
Thus, $b^{\star}(\vec{\xi})$ gives the minimum value attained by the original CBF \eqref{eq:barrier functions} when the target evolves \textit{uncontrolled}, i.e. with $\vec{w}=0$, over the interval $[0,\delta]$. Since $\phi(0,\vec{\xi})=\vec{\xi}$, it follows that
\(b^\star(\vec{\xi})\leq b(\vec{\xi}),\) and therefore $\mathcal{B}^\star := \{ \vec{\xi} \mid b^{\star}(\vec{\xi}) \geq 0 \} \subset \mathcal{B}$. The minimum defined by $b^{\star}(\vec{\xi})$ in \eqref{eq:optimal barrier} can be obtained efficiently, as it is a one-dimensional optimization over the variable $\tau$ (for example, bi-bisection) and the unforced flow $\phi(\tau^{\star}, \vec{\xi})$ of \eqref{eq:main dynamics} admits an explicit analytical solution. We prove shortly in  Proposition \ref{prop:main propsition} that such a minimum is unique.

Similar to $\mathcal{B}$, we can preserve the system state in $\mathcal{B}^{\star}$ by enforcing the constraint 
\begin{equation}\label{eq:new safety constraint}
Q^{\star}(\vec{\xi}(t), \vec{w}(t)) := \diff{\vec{\xi}}b^{\star \top} \diff{t}\vec{\xi}(t) + \kappa b^{\star}(\vec{\xi}(t)) \geq 0,
\end{equation}
for all $t \in [0, \infty)$. However, when considering $b^{\star}(\vec{\xi})$ as a candidate CBF for our system, we need to prove its differentiability. Using sensitivity analysis of the optimization program \eqref{eq:optimal barrier}, we can prove that $b^{\star}(\vec{\xi})$ is Lipschitz continuous and differentiable almost everywhere (i.e., except on a set of measure zero). 
\begin{proposition}\label{prop:main propsition}
Define \( \mathcal{V} := \left\{ \vec{\xi}\mid \|\vec{v}\|=0 \right\}. \)
Then, for every
\(
\vec{\xi}\in
\mathcal{B}_{+}^{\star}
:=
\mathcal{B}^{\star}\setminus\mathcal{V},
\)
the minimizer of~\eqref{eq:optimal barrier} is unique, i.e.,
\(
T^\star(\vec{\xi})
=
\{\tau^\star_{\vec{\xi}}\}
\), 
\(
\tau^\star_{\vec{\xi}}
:=
\tau^\star(\vec{\xi}),
\)
and $b^\star$ is differentiable at $\vec{\xi}$, with gradient 
\begin{equation}\label{eq:derivative bstar}
\diff{\vec{\xi}}b^\star(\vec{\xi})
=
\Phi(\tau^\star_{\vec{\xi}},\vec{\xi})^\top
\diff{\vec{\xi}}b
\left(
\phi(\tau^\star_{\vec{\xi}},\vec{\xi})
\right),
\end{equation}
where
\begin{equation}\label{eq:state transition matrix}
\Phi(\tau,\vec{\xi})
=
I_{13}
+
\int_0^\tau
\diff{\vec{\xi}} f_{\xi}\!\left(\phi(s,\vec{\xi})\right)
\Phi(s,\vec{\xi})\,ds,
\end{equation}
is the state-sensitivity matrix of the unforced flow $\phi(\cdot,\vec{\xi})$.
\end{proposition}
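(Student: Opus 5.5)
The approach is to exploit the explicit form of the unforced flow. With $\vec{w}=0$, the translational part evolves as $\vec{p}(\tau)=\vec{p}+\tau\vec{v}$ and $\vec{v}(\tau)=\vec{v}$. The barrier $b$ in \eqref{eq:barrier functions} depends only on $\vec{p}$ and $\vec{v}$; the rotational states enter $\phi$ but not $b\circ\phi$. Hence $g(\tau):=b(\phi(\tau,\vec{\xi}))$ is an explicit scalar function of $\tau$:
\[
g(\tau)=\frac{\vec{v}^\top\vec{d}(\tau)}{\rho(\tau)}+\sqrt{k h(\tau)+\varepsilon}-\sqrt{\varepsilon},\qquad \vec{d}(\tau)=\vec{d}+\tau\vec{v}.
\]
Because $\vec{\xi}\in\mathcal{B}^\star$, we have $\phi(\tau,\vec{\xi})\in\mathcal{B}\subseteq\mathcal{H}$ on $[0,\delta]$, so $\rho(\tau)\ge r>0$ and $g$ is smooth on the whole interval.

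The first step is to prove that $g$ has a unique minimizer on $[0,\delta]$ whenever $\vec{v}\neq0$, and I would do this through a monotonicity decomposition. The first term $s(\tau)=\vec{v}^\top\vec{d}(\tau)/\rho(\tau)$ is the radial velocity along a straight line. Its derivative is $(\|\vec{v}\|^2\rho^2-(\vec{v}^\top\vec{d})^2)/\rho^3\ge0$, so $s$ is nondecreasing, and strictly increasing unless $\vec{d}\parallel\vec{v}$. The second term is $\sqrt{k(\rho(\tau)-r)+\varepsilon}$, a concave increasing function of $\rho(\tau)$, and $\rho(\tau)$ is convex in $\tau$ (it is the norm of an affine map). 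Consequently $\rho$ decreases while $s<0$ and increases once $s>0$.

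This splits the analysis into two regimes. On the region where $s(\tau)\ge0$, both terms are nondecreasing, so any minimizer lies in $\{\tau: s(\tau)\le0\}$, which is an initial subinterval $[0,\bar\tau]$. On that subinterval I would show that $g$ is strictly convex, or at least strictly quasiconvex. The plan for this is to compute $g''$ using $\dot\rho=s$ and $\dot s=(\|\vec{v}\|^2-s^2)/\rho$, and to bound the positive term $\dot s'$ against the negative curvature coming from the square root.

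This step is the main obstacle, since the sign of $g''$ is not obvious. If convexity fails, the fallback is to show directly that $\dot g(\tau)=0$ has at most one root in $[0,\bar\tau]$. To do so, I would write $\dot g=\dot s+\tfrac{k s}{2\sqrt{kh+\varepsilon}}$ and use the fact that, along the line, $\dot s$ is decreasing while $s<0$ and the second summand is monotone. A unique critical point then gives a unique minimizer, which may be an endpoint. In the degenerate case $\vec{d}\parallel\vec{v}$ we have $s\equiv\pm\|\vec{v}\|$, so $g$ is strictly monotone and the minimizer is an endpoint; the only situation with a continuum of minimizers is $\vec{v}=0$, where $g$ is constant. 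This explains why $\mathcal{V}$ is excluded.

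The second step is differentiability via a Danskin/envelope argument. The map $(\tau,\vec{\xi})\mapsto b(\phi(\tau,\vec{\xi}))$ is $C^1$, because $f_\xi$ is smooth and $b$ is $C^1$ for $h\ge0$ (thanks to the regularization $\varepsilon$), and the set $[0,\delta]$ is compact. Danskin's theorem then gives that $b^\star$ is directionally differentiable, with derivative $\min_{\tau\in T^\star}\diff{\vec{\xi}}b(\phi(\tau,\vec{\xi}))^\top\vec{h}$. When $T^\star$ is a singleton this derivative is linear in $\vec{h}$, so $b^\star$ is differentiable. Applying the chain rule with $\diff{\vec{\xi}}\phi(\tau,\vec{\xi})=\Phi(\tau,\vec{\xi})$ yields \eqref{eq:derivative bstar}. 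Finally, differentiating \eqref{eq:unforced dynamics} in $\vec{\xi}$ under the integral, which is justified by the smooth dependence of ODE solutions on initial data, gives the variational equation \eqref{eq:state transition matrix}.

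A subtlety is that the minimizer may sit at an endpoint ($\tau^\star=0$ or $\delta$) where $\dot g\neq0$. Danskin's theorem still covers this case because the constraint set $[0,\delta]$ does not depend on $\vec{\xi}$. No implicit-function argument is therefore needed, only uniqueness of the minimizer, which the first step supplies.
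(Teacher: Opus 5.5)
\textbf{There is a genuine gap in your first step.} Your second step is fine: Danskin's theorem on the fixed compact set $[0,\delta]$, linearity of the directional derivative when the minimizer is unique, then the chain rule with $\Phi$. It reaches the paper's gradient formula more directly than the paper's LICQ/SSOSC route. Your observation that every minimizer lies in the initial interval $\{s\le 0\}$ is also correct; it matches the paper's $\alpha^\star<0$.

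The gap sits exactly where you flag it, and neither of your routes closes it. Write $c:=kh+\varepsilon$.
\begin{itemize}
\item \emph{Strict convexity on $\{s\le0\}$ is false.} For a head-on approach, $s\equiv-\norm{\vec{v}}$ and $g(\tau)=-\norm{\vec{v}}+\sqrt{k(\rho_0-\norm{\vec{v}}\tau-r)+\varepsilon}-\sqrt{\varepsilon}$ is strictly concave. More generally, $g$ is concave wherever $\dot s$ is small.
\item \emph{The fallback contains a sign error.} From $\dot s=(\norm{\vec{v}}^2-s^2)/\rho$ you get $\ddot s=-3s\dot s/\rho\ge0$ for $s<0$, so $\dot s$ is nondecreasing, not decreasing.
\item \emph{The fallback's monotonicity claim is unjustified.} The derivative $\tfrac{d}{d\tau}(s/\sqrt{c})=\dot s/\sqrt{c}-ks^2/(2c^{3/2})$ has no fixed sign.
\end{itemize}
Since $\ddot g$ really can be negative, $\dot g$ is not monotone. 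No splitting of $\dot g$ into co-monotone summands can therefore give a unique root.

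The missing idea is to examine $\ddot g$ only at critical points, and to use the hypothesis $\vec{\xi}\in\mathcal{B}^\star$ for more than $\rho\ge r$, which you never do. If $\dot g(\tau^\star)=0$, then $\dot s=-ks/(2\sqrt{c})$, which forces $s<0$ when $\vec{v}\neq 0$. Substituting into $\ddot g=\ddot s+\tfrac{k}{2}\tfrac{d}{d\tau}(s/\sqrt{c})$ gives
\[
\ddot g(\tau^\star)=\frac{3ks^2}{2\rho\sqrt{c}}-\frac{k^2 s}{4c^{3/2}}\bigl(s+\sqrt{c}\bigr).
\]
Now $b^\star(\vec{\xi})\ge0$ means $g=s+\sqrt{c}-\sqrt{\varepsilon}\ge 0$ on $[0,\delta]$. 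Hence $s+\sqrt{c}\ge\sqrt{\varepsilon}>0$, and with $s<0$ both terms are positive, so $\ddot g(\tau^\star)>0$. Every critical point is then a strict local minimum. Consequently $\dot g$ vanishes at most once on $[0,\delta]$, since between two consecutive zeros it would have to cross zero downward. The minimizer is therefore unique. This is precisely the paper's computation, written in the variables $\alpha=s/\rho$, $\beta=\norm{\vec{v}}^2/\rho^2$ and $c$.

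The hypothesis is genuinely needed. Take a fast approach along a line passing well inside the obstacle, for instance $k=r=1$, closest-approach distance $0.5$ and $\norm{\vec{v}}\approx 18$. Then $g$ has an interior local maximum near $\rho\approx1.01$ and a local minimum near $\rho\approx2.15$, both with $\rho\ge r$. So an argument that only uses $\rho\ge r$ cannot show that $\dot g$ has a single root.
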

\begin{proof}
    See Appendix \ref{appendix:proof result}.
\end{proof}
Note that the set $\mathcal{V}$ does not contain unsafe states, but is excluded from
$\mathcal{B}_{+}^{\star}$ because the minimizer in~\eqref{eq:optimal barrier}
is nonunique. Indeed, under the unforced dynamics, $\vec{v}=0$ implies
$\vec{p}(\tau)=\vec{p}$ for all $\tau\in[0,\delta]$, and therefore
$b(\phi(\tau,\vec{\xi}))=b(\vec{\xi})$ over the entire prediction horizon.
Hence, $T^\star(\vec{\xi})=[0,\delta]$. This non-differentiability thus does not represent a problem since, when
$\vec{\xi}\in\mathcal{V}\cap\mathcal{B}^{\star}$ the target remains safe
under the unforced dynamics. Proposition \ref{prop:main propsition} therefore allows the safety constraint \eqref{eq:new safety constraint} to be evaluated using the gradient \eqref{eq:derivative bstar} for all
$\vec{\xi}\in\mathcal{B}_{+}^{\star}$. 
\section{Controller architecture}
Following the hybrid-systems formalism in~\cite{lygeros2003dynamical}, we propose a controller architecture that switches among different control modes depending on the state of the system and the required chaser allocation. At a high level, while $\vec{\xi}\in\mathcal{B}^{\star}$, the chasers may optimize their allocation $M$. When operation in $\mathcal{B}^{\star}$ can no longer be maintained, the chasers relocate to the backup allocation $M^{\bak}$, from which safety can always be enforced (Proposition~\ref{prop:last prop}).

To define our controller, we consider a timer state $s \in [0,\delta]$, and the \textit{augmented} state $\vec{z} := (\vec{\xi}, s)$. We thus formally define the hybrid system that defines our controller architecture as:
\begin{equation}\label{eq:hybrid system}
H := (\mathcal{Z}, \mathcal{Q}, \vec{w}(\vec{\xi}, q), \Sigma, G, R, \text{Inv}, \text{Init}).
\end{equation}
The set $\mathcal{Z} := \mathcal{B} \times [0,\delta]$
is the domain of the continuous state $\vec{z}$, and
$\mathcal{Q} := \{q^{\star}, q^{\bak}, q^{\recon}\}$ is a set of discrete states,
or \emph{modes}, respectively termed as \textit{primary}, \textit{backup}, and \textit{relocation} states. The map $\vec{w}(\vec{\xi}, q)$ assigns a feedback controller to
each mode, so that the continuous state evolves as
\begin{equation}\label{eq:continuous dynamics}
\hspace{-0.2cm}\diff{t}\vec{\xi} = f_{\xi}(\vec{\xi}) + G_{\xi}(\vec{\xi}) \, \vec{w}(\vec{\xi}, q),
\; 
\diff{t}s = \begin{cases} 1 & \text{if } q = q^{\recon}, \\ 0 & \text{else.}\end{cases}
\end{equation}
The relation $\Sigma \subseteq \mathcal{Q}^{2}$ collects the transitions between
modes, where $q_1 \rightarrow q_2$ denotes a transition from $q_1$ to $q_2$. The
map $G : \Sigma \rightarrow 2^{\mathcal{Z}}$ assigns to each transition a
\emph{guard}, that is, a subset of the continuous state space within which that discrete
transition may be taken, while $R : \Sigma \times \mathcal{Z} \rightarrow
\mathcal{Z}$ assigns to each transition a \emph{reset map}. The map
$\text{Inv} : \mathcal{Q} \rightarrow 2^{\mathcal{Z}}$ assigns to each mode an
\emph{invariant}, the region in which the system is permitted to remain in that
mode. Finally, $\text{Init}$ is the set of admissible initial conditions. We now
specify each of these elements for the architecture at hand.

\noindent
\textbf{Feedback control modes}:
Two modes correspond to the two CLF-CBF feedback controllers:
\begin{tcolorbox}[colback=gray!20, colframe=gray!20, arc=0mm, top=0mm, bottom=0mm, left=0mm, right=0mm, boxsep=0mm]
\begin{subequations}\label{eq:backup controller}
\begin{align}
&\hspace{-0.2cm}\vec{w}^{\star}(\vec{\xi}, q^{\bak}) := \underset{\vec{u} \in \mathbb{U}, \; \mu \in \mathbb{R}_{\geq0}}{\text{argmin}} \, \|\vec{u}\|^2_{2} + \lambda_{\mu}\mu, \; \text{subject to:}\\
&\hspace{-0.2cm}L(\vec{\xi}, \vec{w}(M^{\bak}, \vec{u})) \leq \mu, \;  Q(\vec{\xi}, \vec{w}(M^{\bak}, \vec{u})) \geq 0, \label{eq:backup controller:constraints}
\end{align}
\end{subequations}
\end{tcolorbox}
and
\begin{tcolorbox}[colback=gray!20, colframe=gray!20, arc=0mm, top=0mm, bottom=0mm, left=0mm, right=0mm, boxsep=0mm]
\begin{subequations}\label{eq:cool controller}
\begin{align}
&\hspace{-0.2cm}\vec{w}^{\star}(\vec{\xi}, q^{\star}) := \hspace{-0.5cm} \underset{\vec{u} \in \mathbb{U}, \alloc \in \mathbb{M}, \; \mu \in \mathbb{R}_{\geq0}}{\text{argmin}} \hspace{-0.1cm} \|\vec{u}\|_{2}^2 + \lambda_{\mu}\mu, \; \text{subject to:}\\
&\hspace{-0.2cm}L(\vec{\xi}, \vec{w}(M, \vec{u})) \leq \mu, \;  Q^{\star}(\vec{\xi}, \vec{w}(M, \vec{u})) \geq 0, \label{eq:primary controller:constraints}
\end{align}
\end{subequations}
\end{tcolorbox}
which we refer to as the \textit{backup controller} and the \textit{primary
controller}, respectively. In the \textit{backup} mode, the configuration is fixed to $M^{\bak}$, while in the primary controller, the configuration $M \in \mathbb{M}$ is optimized and can thus be changed at the expense of triggering a relocation. In the third mode $q^{\recon}$, the \textit{relocation} mode, we let $\vec{w}^{\star}(\vec{\xi}(t), q^{\recon}) := \vec{0}$, with the target following the unforced flow $\phi(\cdot,\vec{\xi})$, while the timer $s$ measures the elapsed relocation time as per \eqref{eq:continuous dynamics}.

\noindent
\textbf{\textit{Invariants}}
The primary controller is only applied on $\mathcal{B}^{\star} \subset \mathcal{B}$,
whereas the backup controller is allowed on all of $\mathcal{B}$. During
relocation, the timer $s$ will also vary from $0$ to $\delta$. We thus have
$\text{Inv}(q^{\star})  = \mathcal{B}^{\star} \times \{0\}, \; \text{Inv}(q^{\bak})   = \mathcal{B} \times \{0\}, \\ \text{Inv}(q^{\recon}) = \mathcal{B} \times [0,\delta].$

\noindent
\textbf{Guards and resets}
Every change of allocation $M$ forces the system through the relocation mode,
so all transitions in $\Sigma$ are of the form $q \rightarrow q^{\recon}$ or
$q^{\recon} \rightarrow q$. The
guards are then
$G(q^{\bak} \rightarrow q^{\recon})  := \mathcal{B}^{\star} \times \{0\}, \; G(q^{\star} \rightarrow q^{\recon})  := \mathcal{B}^{\star} \times \{0\}, 
G(q^{\recon} \rightarrow q^{\star}) := \mathcal{B}^{\star} \times \{\delta\}, \; G(q^{\recon} \rightarrow q^{\bak})   := (\mathcal{B}\setminus\mathcal{B}^{\star}) \times \{\delta\},$
and the reset map simply resets the timer on every relocation,
$R(q^{\recon} \rightarrow q_2, (\vec{\xi},s)) = (\vec{\xi}, 0)$, leaving the physical
state unchanged.

Informally, the primary controller may be engaged from the backup mode whenever the
state lies in $\mathcal{B}^{\star}$. On the other hand, the backup controller is triggered from the primary controller as follows: let $\mathcal{S}^{\mathcal{B}^{\star} \rightarrow \mathcal{B}^{\star}} := \{\vec{\xi} \in \mathcal{B}^{\star} \mid \phi(\delta, \vec{\xi}) \in \mathcal{B}^{\star}\}, 
\mathcal{S}^{\mathcal{B}^{\star} \rightarrow \mathcal{B}} := \{\vec{\xi} \in \mathcal{B}^{\star} \mid \phi(\delta, \vec{\xi}) \in \mathcal{B} \setminus \mathcal{B}^{\star}\}$, 
where $\mathcal{B}^{\star} = \mathcal{S}^{\mathcal{B}^{\star} \rightarrow \mathcal{B}^{\star}} \cup \mathcal{S}^{\mathcal{B}^{\star} \rightarrow \mathcal{B}}$. When the primary controller requires an allocation change from a state in $\mathcal{S}^{\mathcal{B}^{\star} \rightarrow \mathcal{B}^{\star}}$, then the hybrid architecture allows the mode change $q^{\star} \rightarrow q^{\recon}   \rightarrow q^{\star}$, since after relocation, the system state will again be in $\mathcal{B}^{\star}$. On the other hand, if a relocation is required from $\mathcal{S}^{\mathcal{B}^{\star} \rightarrow \mathcal{B}}$, then the backup mode is reached after relocation as $q^{\star} \rightarrow q^{\recon}   \rightarrow q^{\bak}$.

\noindent
\textbf{Initial conditions}
Finally, assuming the system starts within the safe set with the chasers already
in place, we take
$$
\text{Init} := \bigl(\{(\vec{\xi}, q^{\bak}) \mid \vec{\xi} \in \mathcal{B}\}
\cup \{(\vec{\xi}, q^{\star}) \mid \vec{\xi} \in \mathcal{B}^{\star}\}\bigr) \times \{0\},
$$
which is consistent with the invariants above. 

We conclude by proving that the resulting hybrid system architecture ensures safety for the system during tracking.

\begin{figure*}[t!]
    \centering
    \includegraphics[width=0.9\linewidth]{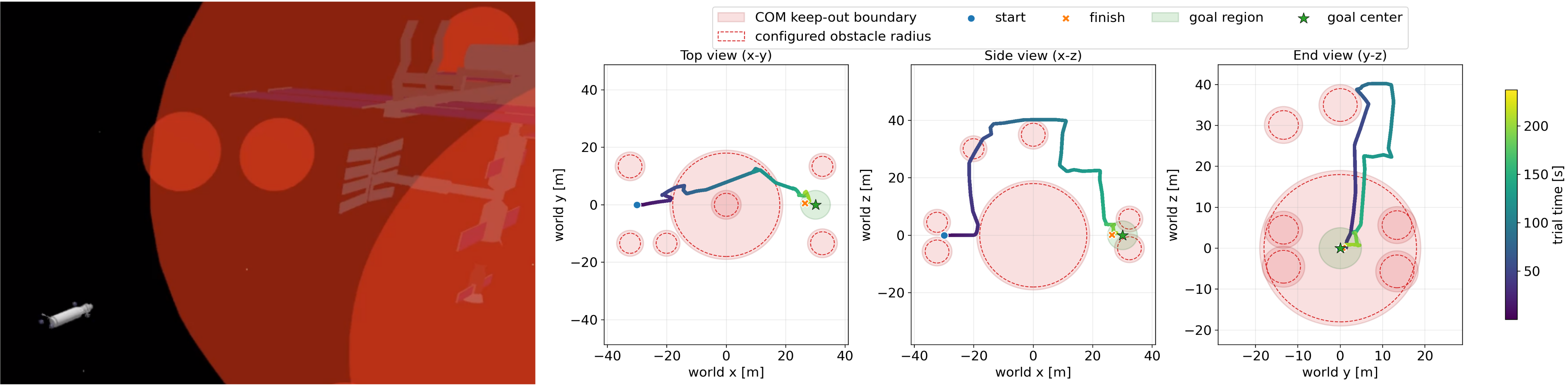}
    \caption{Transportation scenario and trajectory of the target center of mass under the operator's commands. Video available at: \url{https://youtu.be/W_9vIz7-wOk}
    }
    \label{fig:whole trajectory}
\end{figure*}

\begin{proposition}\label{prop:last prop}
    Let the hybrid dynamical system $H$ as per \eqref{eq:hybrid system}, with initial condition $(\vec{\xi}(0),q(0),s(0))\in\mathrm{Init}$. Then, the system trajectory satisfies $\vec{\xi}(t) \in \mathcal{B},\; \forall t\geq0$. 
\end{proposition}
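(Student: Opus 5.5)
The plan is to argue mode by mode along an arbitrary hybrid execution of $H$ and show that each continuous phase keeps $\vec{\xi}$ in $\mathcal{B}$. Each discrete transition leaves $\vec{\xi}$ unchanged, so safety then follows by induction over the sequence of phases (the reset maps only zero the timer $s$). Concretely, I would prove that every phase starting from a state allowed by the corresponding invariant or guard ends in a state satisfying the next guard or invariant. Since $\mathrm{Init}$ requires $\vec{\xi}(0)\in\mathcal{B}$ in mode $q^{\bak}$ and $\vec{\xi}(0)\in\mathcal{B}^\star\subset\mathcal{B}$ in mode $q^\star$, the base case is immediate.

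\textbf{Backup mode.} Here the allocation is fixed to $\alloc^{\bak}$ and the controller \eqref{eq:backup controller} enforces $Q(\vec{\xi},\vec{w})\ge 0$. By the argument following \eqref{eq:cbf condition}, this CBF condition together with Nagumo's theorem on $h$ keeps $\vec{\xi}(t)\in\mathcal{B}$ for as long as the mode is active. The step I expect to be the main obstacle is \emph{feasibility} of \eqref{eq:backup controller} on all of $\mathcal{B}$; the CLF constraint is always feasible thanks to the slack $\mu$, so only the CBF constraint matters.

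For feasibility I would choose $\vec{u}$ attaining $a_{\max}(\alloc^{\bak})$ in \eqref{eq:min-max-acc}, so that the inertial force gives outward radial acceleration $\vec{d}^\top R\vec{f}/(\rho m)\ge a_{\max}=k/2$ along $\vec{e}=\vec{d}/\rho$. It would then remain to show $Q\ge 0$ whenever $b\ge 0$. I would first note that $\frac{\|\vec{v}\|^2}{\rho}-\frac{(\vec{d}^\top\vec{v})^2}{\rho^3}\ge 0$ by Cauchy--Schwarz. The remaining term $\frac{k\,\dot h}{2\sqrt{kh+\varepsilon}}$ can be negative only when $\dot h<0$, and in that case $b\ge0$ gives $|\dot h|\le \sqrt{kh+\varepsilon}-\sqrt{\varepsilon}<\sqrt{kh+\varepsilon}$. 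Hence this term is bounded below by $-k/2$, which the braking term $k/2$ compensates. Adding $\kappa b\ge 0$ then yields $Q\ge0$. If this bound turns out to be too tight (for instance through the exact form of $m_Q$), I would fall back on the standard braking argument of \cite{safety_barrier} that underlies the choice $k=2a_{\max}$.

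\textbf{Primary and relocation modes.} In $q^\star$ the invariant is $\mathcal{B}^\star\times\{0\}$, so $\vec{\xi}\in\mathcal{B}^\star\subset\mathcal{B}$ throughout that phase; the constraint $Q^\star\ge0$ with the gradient \eqref{eq:derivative bstar} from Proposition~\ref{prop:main propsition} is what makes remaining in the invariant possible. On $\mathcal{V}$ the gradient is not defined, but there the target is stationary and trivially safe. The mode $q^{\recon}$ can only be entered through the guards $G(q^\star\to q^{\recon})$ or $G(q^{\bak}\to q^{\recon})$, both equal to $\mathcal{B}^\star\times\{0\}$. Thus relocation starts from some $\vec{\xi}_0\in\mathcal{B}^\star$ and evolves with $\vec{w}=0$, i.e.\ $\vec{\xi}(t_0+\tau)=\phi(\tau,\vec{\xi}_0)$ for $\tau\in[0,\delta]$. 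By the defining property of $\mathcal{B}^\star$ via \eqref{eq:optimal barrier}, $b(\phi(\tau,\vec{\xi}_0))\ge b^\star(\vec{\xi}_0)\ge 0$ on this interval. Combined with the invariance of $\mathcal{H}$ implied by $b\ge0$ (the Nagumo argument above), this gives $\phi(\tau,\vec{\xi}_0)\in\mathcal{B}$ for all $\tau\in[0,\delta]$.

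At $s=\delta$ the state either lies in $\mathcal{B}^\star$, so the guard to $q^\star$ applies, or in $\mathcal{B}\setminus\mathcal{B}^\star$, so the guard to $q^{\bak}$ applies. In both cases the next phase starts from a state meeting the hypotheses used above, which completes the induction. The only remaining point is to note that, under the usual non-Zeno assumption on executions, every $t\ge0$ falls within some phase, which gives $\vec{\xi}(t)\in\mathcal{B}$ for all $t\ge 0$.
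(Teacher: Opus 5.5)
Your proposal is correct and follows essentially the same mode-by-mode argument as the paper. The steps match: feasibility of the backup QP by the braking bound (Cauchy--Schwarz term $\ge 0$, $\frac{k\dot h}{2\sqrt{kh+\varepsilon}}\ge -k/2$ from $b\ge 0$, $\kappa b\ge 0$, offset by $\bar a\ge a_{\max}=k/2$), safety in $q^\star$ through $\mathcal{B}^\star\subset\mathcal{B}$, and safety during $q^{\recon}$ from entry in $\mathcal{B}^\star$ and the definition of $b^\star$, concatenated over phases. The one small difference is that you assume non-Zeno executions, whereas this follows from the structure of $H$ itself: every transition enters or leaves $q^{\recon}$, and each stay there lasts exactly $\delta$, which is how the paper's concatenation step argues it.
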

\begin{proof} We argue invariance in the set $\mathcal{B}$ (and thus safety) mode by mode.\\

\textbf{Backup mode:} On $\mathcal{B}$ the controller \eqref{eq:backup controller} is always feasible. Indeed, by Assumptions~\ref{ass:controllability}--\ref{ass:min chasers}, for every state $\vec{\xi}$, the fixed allocation $M^{\bak}$ admits an input $\bar{\vec{u}} \in \mathbb{U}$
with $\bar{\vec{w}} := \vec{w}(M^{\bak},\bar{\vec{u}}) =
[\bar{\vec{f}}^\top, \bar{\vec{\tau}}^\top]^\top$ such that
$\frac{\vec{d}^\top R(\vec{q})}{\rho m}\bar{\vec{f}} := \bar{a} \geq a_{\max}(M^{\bak}) > 0$, with
$a_{\max}(M^{\bak})$ as in \eqref{eq:min-max-acc}. Since for every
$\vec{\xi} \in \mathcal{B}$, we have
$
\frac{\vec{v}^\top\vec{d}}{\rho\sqrt{kh + \varepsilon}} \geq -1$, $\kappa \, b(\vec{\xi}) \geq 0$, and $\frac{\|\vec{v}\|^2}{\rho} - \frac{(\vec{d}^\top\vec{v})^2}{\rho^3}
\geq \frac{\|\vec{v}\|^2}{\rho} - \frac{\|\vec{v}\|^2\|\vec{d}\|^2}{\rho^3} = 0,
$
then
$
Q(\vec{\xi},\bar{\vec{w}})
= \underbrace{\frac{\|\vec{v}\|^2}{\rho} - \frac{(\vec{d}^\top\vec{v})^2}{\rho^3}}_{\geq \, 0}
+ \underbrace{\frac{k \, \vec{d}^\top\vec{v}}{2\rho\sqrt{kh + \varepsilon}}}_{\geq \, -k/2 \, = \, -a_{\max}(M^{\bak})}
+ \underbrace{\frac{\vec{d}^\top R(\vec{q}) \bar{\vec{f}}}{\rho m}}_{= \, \bar{a}}
+ \underbrace{\kappa \, b(\vec{\xi})}_{\geq \, 0} \\
\geq \bar{a} - a_{\max}(M^{\bak}) \geq 0.
$
So $\bar{\vec{w}}$ is a feasible solution of \eqref{eq:backup controller} and invariance of $\mathcal{B}$
follows from the satisfaction of the CBF constraint and Nagumo's theorem \cite{ames2019control}. 

\textbf{Primary mode:} By Prop. \ref{prop:main propsition}, $b^{\star}$ is
differentiable on $\mathcal{B}^{\star}_{+}$, so $Q^{\star} \geq 0$ in
\eqref{eq:cool controller} is a well-posed CBF constraint and, whenever \eqref{eq:cool controller} is feasible, renders $\mathcal{B}^{\star} \subseteq
\mathcal{B}$ forward invariant. If for a state $\vec{\xi} \in \mathcal{B}^{\star}$ the controller \eqref{eq:cool controller} is infeasible, a relocation is triggered, bringing mode $q^{\recon}$ and then to either $q^{\bak}$ or $q^{\star}$. 

\textbf{Relocation mode:} Here $\vec{w} = \vec{0}$ and the state follows the
unforced flow for $\delta$ time units. Both incoming guards lie in
$\mathcal{B}^{\star}$ and the reset leaves $\vec{\xi}$ unchanged, so entry occurs
at some $\vec{\xi} \in \mathcal{B}^{\star}$ and hence $\phi(\tau,\vec{\xi}) \in
\mathcal{B}$ for all $\tau \in [0,\delta]$ by definition of the predictive safe
set. The target is thus safe throughout the unactuated window, though it
may leave $\mathcal{B}^{\star}$.

\textbf{Concatenation:} Each visit to $q^{\recon}$ lasts $\delta > 0$ and is
separated from the next by an interval in a controlled mode. Starting from $\text{Init}$ and concatenating the three cases gives
$\vec{\xi}(t) \in \mathcal{B}$ for all $t \geq 0$. Since $q^{\bak}$ is invariant
by the first step, it acts as a safe terminal fallback.
\end{proof}
\subsection{Lazy Updates and Convex Relaxation}\label{sec:relaxation}
The controller in \eqref{eq:cool controller} requires solving a MIQP at every time step to obtain the optimal force 
vector $\vec{u}^{\star}$ and allocation $\alloc^{\star}$, which can be 
impractical for real-time hardware implementation. To improve tractability, we introduce two heuristics: \textit{lazy 
updates} and 
\textit{convex relaxation}.

\paragraph{Lazy updates} Suppose that at time $t$ we solve 
\eqref{eq:cool controller} and obtain the optimal allocation 
$M^{\star}(\vec{\xi}(t))$. Rather than resolving the full MIQP at every 
step, we hold the allocation fixed over a window of length $\delta_{\text{lazy}}$, i.e., 
$M^{\star}(\vec{\xi}(\tau)) = M^{\star}(\vec{\xi}(t))$ for all 
$\tau \in [t, t+\delta_{\text{lazy}}]$. Over this interval, 
\eqref{eq:cool controller} reduces to a QP in $\vec{u}$ alone, which can be 
solved at a much faster rate than the full MIQP. The allocation is then 
refreshed only at the sampling instants $t^{\text{sol}} = N\delta_{\text{lazy}}$, 
$N \in \mathbb{N}$.

\paragraph{Convex relaxation} At each refresh instant $t^{\text{sol}}$ we 
still need an (approximately) optimal integer allocation. Instead of solving 
the MIQP directly, we approximate it with two consecutive QPs.

\textit{Step 1 (relaxed allocation).} We relax the binary constraint 
$M \in \mathbb{M}$ to the continuous set 
$M_{\text{rlx}} \in \mathbb{M}_{\text{rlx}} := \{M\in [0,1]^{n_{\th}\times n_a} \mid \sum_{i=1}^{n_{\th}} \alloc_{ij} = 1,\,  \sum_{j=1}^{n_{a}} \alloc_{ij} \leq 1\}$ and solve
$$
\begin{aligned}
\vec{w}_{\text{rlx},1}^{\star}(\vec{\xi}(t), q^{\star}) &:= \hspace{-0.3cm} 
\underset{\vec{u} \in \mathbb{U},\; \alloc \in \mathbb{M}_{\text{rlx}},\; \mu \in \mathbb{R}_{\geq 0}}{\text{argmin}} 
\hspace{-0.3cm}\|\vec{u}\|_{2}^2 + \lambda_{\mu}\mu, \; \text{s.t.:}\; \eqref{eq:primary controller:constraints}
\end{aligned}
$$
This yields a relaxed allocation $M^{\star}_{\text{rlx}}$, which we round 
entry-wise to obtain a feasible integer allocation 
$M^{\star}_{\text{round}} \in \mathbb{M}$.

\textit{Step 2 (force recovery).} Holding $M^{\star}_{\text{round}}$ fixed, 
we solve a second QP over $\vec{u}$ alone:
$$
\begin{aligned}
&\vec{w}_{\text{rlx},2}^{\star}(\vec{\xi}(t), q^{\star}) := 
\underset{\vec{u} \in \mathbb{U},\; \mu \in \mathbb{R}_{\geq 0}}{\text{argmin}}  
\, \|\vec{u}\|_{2}^2 + \lambda_{\mu}\mu\; \text{subject to:} \\
&L(\vec{\xi}(t), \vec{w}(M^{\star}_{\text{round}}, \vec{u})) \leq \mu, 
Q^{\star}(\vec{\xi}(t), \vec{w}(M^{\star}_{\text{round}}, \vec{u})) \geq 0.
\end{aligned}
$$
From which we obtain a wrench satisfying the safety and stability constraints. 

These heuristics do not affect the safety of the system, as only the primary controller is approximated in such a manner, while the backup configuration can always be reached safely and computed in real-time.   
\section{Simulations Results}\label{sec:sim}
To validate the approach, we select the scenario of transporting an expendable rocket (the target) from one side of the International Space Station (ISS) to the other side of the station with the minimum number of chasers, $ n_a=4$. We use Gazebo as the physics engine for the simulations, with target and chaser mesh files from the NASA catalog\footnote{https://science.nasa.gov/3d-resources/}. 

We select 40 actuation locations on the target, where chasers can freely exert push force.   The target and chasers parameters, as well as the controller parameters used, are shown in Table \ref{tab:parameters}. We purposely scale down size, mass, and inertia compared to a real expendable rocket body for this demonstration, to obtain faster dynamics that are more challenging to compensate for, compared to a real space mission where operations are limited to small accelerations. We use a remote controller to provide reference twist commands $\vec{\eta}_{d} = [v_x,v_y,v_x,\omega_x,\omega_y,\omega_z]$ with linear velocity commands expressed in the inertial frame $\mathcal{I}$ and angular velocity commands expressed in the body frame of the target.

In Figure \ref{fig:whole trajectory}, we show the full target trajectory over time as it moves from the starting pose to the goal area marked as a green sphere. The keep-out zone of the ISS and a few extra hazardous areas are marked in red. Within the controller, we always consider the closest obstacle to the target for computing the CBF $b^{\star}(\vec{\xi}(t))$ and is gradient. Figure \eqref{fig:controller trajectory} shows the value of the CLF and CBFs during the experiments, as well as the mode switching of the controller and the provided input commands from the operator. At approximately 20 seconds, the controller successfully hits the backup mode to avoid a collision with the ISS, where the $b(\vec{\xi})$ hits the minimum value of $0.03$. At the same time the last two panels in \eqref{fig:controller trajectory} show the computational time for the hybrid controller (accounting for solving the optimization in \eqref{eq:backup controller} and \eqref{eq:cool controller}, checking obstacle proximity, recording the operator input, evaluating, \(V(\vec{\xi}(t))\), \(b(\vec{\xi}(t))\), \(b^*(\vec{\xi}(t))\), \(\tau^*(\vec{\xi}(t))\), and transition guards). When a reallocation change is required, the relaxed QP median solve time is 27 times faster than the MIQP solver. In real deployment, where the dynamics of the systems are less excited, the solution time of the MIQP could still yield reasonable real-time performance with an absolute computational time not exceeding 0.1s.
\begin{table}
\caption{Parameters of the reported trial.}
\label{tab:parameters}
\centering
\setlength{\tabcolsep}{4pt}
\renewcommand{\arraystretch}{1.1}
\begin{tabular}{@{}ll|ll|ll@{}}
\toprule
\toprule
$n_a$                     & 4        & $\delta$          & 3.0\,s & $m$   & 20.0\,kg \\
$u_{\max}$                & 10\,N    & $K$               & 2.0    & $\mathrm{diag}(J)$ & $(0.57, 3.84, 4.19)$\,kg\,m$^2$\\
$n_a \cdot u_{\max}$      & 40\,N    & $\kappa$          & 1.0    & $\bar{\eta}$& 1\\
$n_{\th}$                 & 40       & $\lambda_{\mu}$   & 2000   & $k_{f}$& 10\\
                          &          & $\delta_{\text{lazy}}$   & 1s     &  &\\
\bottomrule
\end{tabular}
\end{table}
\begin{figure}
    \centering
    \includegraphics[width=\linewidth]{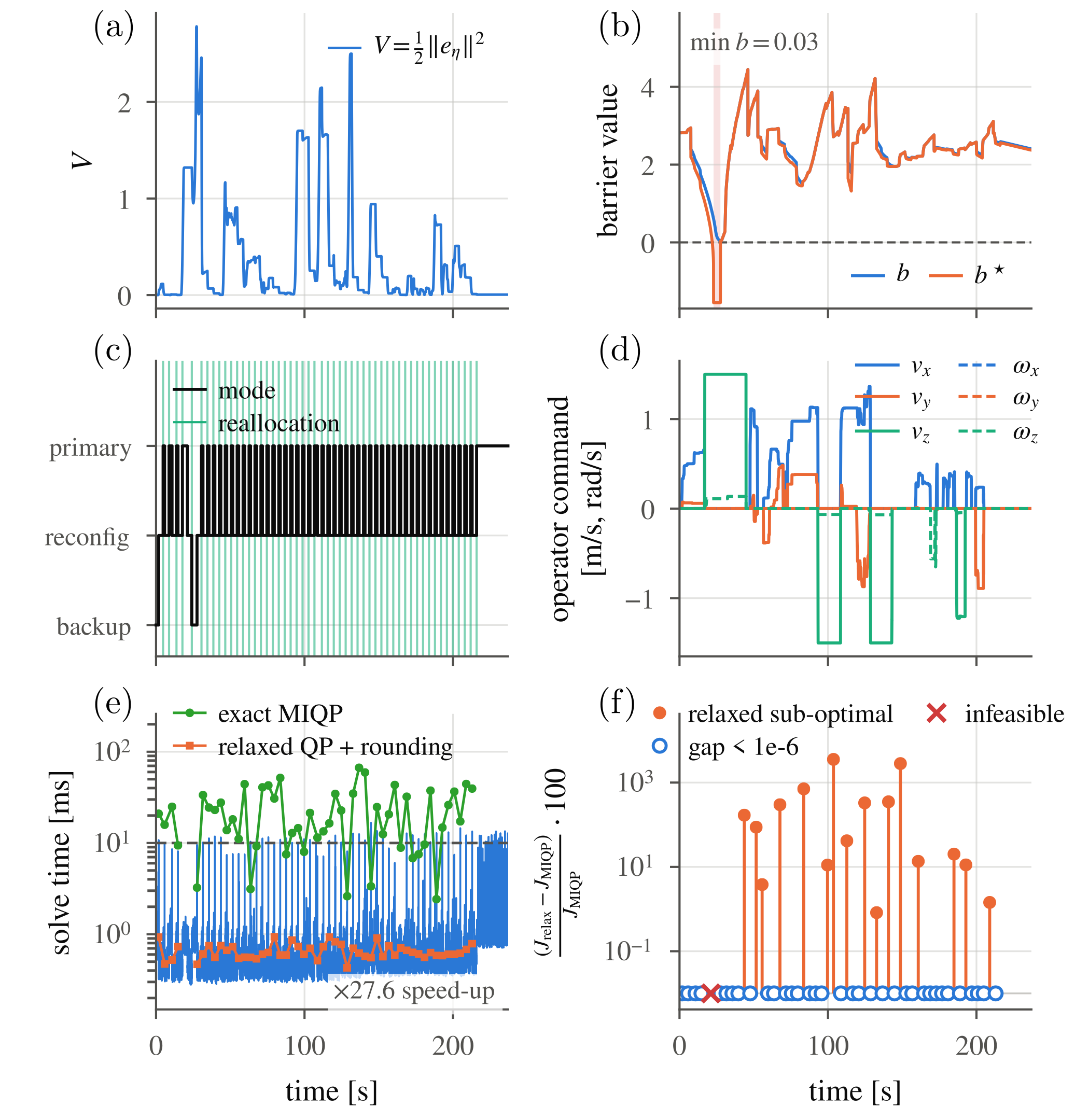}
    \caption{Simulation results. (a) Lyapunov function $V(\vec{\xi}(t))$, (b) Barrier function value for $b(\vec{\xi}(t))$ and  $b^{\star}(\vec{\xi}(t))$, (c) Modes switches within the hybrid system $H$, (d) Operator velocity commands, (e) controller solver-time and comparative study MIQP vs relaxed solution solver time (f) Relative optimality gap between MIQP solver and relaxed-QP solver.}
    \label{fig:controller trajectory}
\end{figure}

\section{Conclusion}\label{sec:conclusion}
We presented an approach for transporting a passive target while preserving 
its safety, using a set of chaser spacecraft as a virtual actuation system 
subject to reconfiguration switches. In future work, we aim to extend the 
framework to consider realistic contact forces between the chasers and 
the target and uncertainties in the target's mass and inertia. 

\section{Acknowledgements}
The authors used generative AI tools to help develop the simulation code and software architecture. All AI-assisted content was reviewed, verified, and edited by the authors, who take full responsibility for the accuracy and originality of the manuscript.

\appendix
\subsection{Proof of Proposition \ref{prop:main propsition} }\label{appendix:proof result}

\begin{proof} We prove that $T^{\star}$ is a singleton and, by Daskins' theorem \cite{bertsekas1997nonlinear}, that differentiability of $b^{\star}(\vec{\xi})$ holds on $\mathcal{B}_{+}^\star$.\par

\textbf{$\bm{T^{\star}}$ is a singleton}: $b^{\star}(\vec{\xi})$ is the minimum of the functon 
\[
g_{\vec{\xi}}(\tau)
:=
b(\phi(\tau,\vec{\xi})),
\qquad
\tau\in[0,\delta].
\]
Since $g_{\vec{\xi}}$ is continuous and $[0,\delta]$ is compact, at least one minimizer exists $\tau^{\star}_{\vec{\xi}}:= \tau^{\star}(\vec{\xi})$. Namely, $\tau^{\star}_{\vec{\xi}}$ being a solution requires the existence of a pair of Lagrangian multipliers $\lambda^{\star}_{1,\vec{\xi}}:= \lambda^{\star}_{1}(\vec{\xi}), \lambda^{\star}_{2,\vec{\xi}}:= \lambda^{\star}_{2}(\vec{\xi})$, where 
$w_{\vec{\xi}}^{*}:= (\tau^{\star}_{\vec{\xi}}, \lambda_{1,\vec{\xi}}^{\star}, \lambda_{2,\vec{\xi}}^{\star})$, that satisfy the following KKT system
\begin{subequations}\label{eq:kkt}
\begin{align}
&\diff{\tau} g(\tau^{\star}_{\vec{\xi}}) - \lambda_{1,\vec{\xi}}^{\star} +\lambda_{2,\vec{\xi}}^{\star} = 0, && \hspace{-0.3cm}\text{(stationarity)} \label{eq:kkt_stationarity} \\
&\lambda_{1,\vec{\xi}}^{\star} \tau^{\star}_{\vec{\xi}} = 0 , \; \lambda_{2,\vec{\xi}}^{\star} (\tau^{\star}_{\vec{\xi}} - \delta) = 0, &&  \hspace{-0.3cm}\text{(complementarity)}\\
&0\leq \tau^{\star}_{\vec{\xi}} \leq \delta, \; \lambda_{1,\vec{\xi}}^{\star},\lambda_{2,\vec{\xi}}^{\star} \geq 0 .&&  \hspace{-0.3cm} \text{(feasibility)}
\end{align}
\end{subequations}
which, by inspection, admits only three solution cases
$$
\begin{array}{llll}
\text{S1:}&\tau^{\star}_{\vec{\xi}} = 0 & \diff{\tau} g(\tau^{\star}_{\vec{\xi}}) = \lambda_{1,\vec{\xi}}^{\star}  & \lambda_{1,\vec{\xi}}^{\star} \geq 0, \\
\text{S2:}&\tau^{\star}_{\vec{\xi}} = \delta & \diff{\tau} g(\tau^{\star}_{\vec{\xi}}) = -\lambda_2 &  \lambda_{2,\vec{\xi}}^{\star}  \geq 0, \\
\text{S3:}&\tau^{\star}_{\vec{\xi}} \in (0,\delta) & \diff{\tau} g(\tau^{\star}_{\vec{\xi}}) = 0 & \lambda_{2,\vec{\xi}}^{\star}, \lambda_{1,\vec{\xi}}^{\star}  =0. \\
\end{array}
$$
We now note that any triplet $w_{\vec{\xi}}^{\vec{\star}}$ satisfying \eqref{eq:kkt}, also satisfies the Linear Independence Constraint Qualification (LICQ) condition \cite[Def. 2.3]{pacaud2025sensitivity} since $\tau^{\star}_{\vec{\xi}} = 0$ and $\tau^{\star}_{\vec{\xi}} - \delta=0$ can not be active at the same time. Moreover, letting the \textit{critical cone} \cite[Eq. 2.25]{pacaud2025sensitivity}
$$
\mathcal{D}(w_{\xi}^{\star}) := \left\{ s \in \mathbb{R} \Bigl|
\begin{array}{ll}
    s  = 0   & \text{if} \;\; \lambda_{1,\vec{\xi}}^{\star} > 0 ;\; \tau^{\star}_{\vec{\xi}} = 0 \\
    s  = 0  & \text{if} \; \;\lambda_{2,\vec{\xi}}^{\star} > 0 ;\; \tau^{\star}_{\vec{\xi}} = \delta \\
    \mathbb{R} & \text{else}
\end{array}
\right\}
$$
then $w_{\vec{\xi}}^{\star}$ respects the Strong Second-order Sufficiency Condition (SSOSC) \cite[Def. 2.13]{pacaud2025sensitivity} if, 
$
s^2 \cdot \difftwo{\tau} g(\tau^{\star}_{\vec{\xi}}) > 0, \; \forall s\in \mathcal{D}(w_{\xi}^{\star})\setminus \{0\},
$
which simplifies to
\begin{equation}\label{eq:ssocs cases}
\difftwo{\tau} g(\tau^{\star}_{\vec{\xi}}) > 0 \; \text{if} \;\begin{cases}
     \makebox[0pt][l]{(i)}\phantom{(iii)}   \lambda_{2,\vec{\xi}}^{\star} = 0, \; \tau^{\star}_{\vec{\xi}} = \delta \;  \\
      \makebox[0pt][l]{(ii)}\phantom{(iii)}  \lambda_{1,\vec{\xi}}^{\star} = 0,\; \tau^{\star}_{\vec{\xi}} = 0\;\\
      \makebox[0pt][l]{(iii)}\phantom{(iii)} \lambda_{1,\vec{\xi}}^{\star} = 0,\; \lambda_{2,\vec{\xi}}^{\star}= 0.
\end{cases}
\end{equation}

If SSOSC property in \eqref{eq:ssocs cases} holds (which we prove next), then the triplet $w_{\vec{\xi}}^{\star}$ satisfying \eqref{eq:kkt} is unique \cite[Prop. 3.6]{pacaud2025sensitivity} and $T^{\star}$ is a singleton. We thus derive next the second derivative $\difftwo{\tau} g(\tau^{\star}_{\vec{\xi}})$. In particular, recall that the function $b(\vec{\xi})$ in \eqref{eq:barrier functions} depends only on the position $\vec{p}$ and velocity $\vec{v}$, and for unforced solution $\phi(\tau, \vec{\xi})$, these evolve as
\begin{equation}
    \vec{p}(\tau) = \vec{p}(0) + \vec{v}\tau, \quad  \vec{v}(\tau) = \vec{v}(0), \; \forall \tau \in [0,\delta].
\end{equation}
Let the new convenient coordinate variables
$$
\alpha(\tau) = \frac{\vec{d}(\tau)^\top\vec{v}(\tau)}{\rho(\tau)^2}, \;\; \beta(\tau) = \frac{\|\vec{v}(\tau)\|^2}{\rho(\tau)^2}, \;   c(\tau) = kh(\tau) + \epsilon,
$$
where $\diff{\tau} \rho(\tau) = \alpha(\tau) \rho(\tau), \diff{\tau} c(\tau) = k \alpha(\tau) \rho(\tau), \diff{\tau} \alpha (\tau)= \beta(\tau) - 2\alpha^2(\tau),  \diff{\tau} \beta(\tau) = -2\alpha(\tau)\beta(\tau)$. 
In these coordinates, $g_{\vec{\xi}}(\tau)$ and its derivatives are 
\begin{subequations}
\begin{align}
g_{\vec{\xi}}(\tau) &= \rho \left[\alpha  + \frac{\sqrt{c}}{\rho}\right], \label{eq:barrier value new coordinate} \\
\diff{\tau}g_{\vec{\xi}}(\tau) &= \rho \left[(\beta - \alpha^2) + \frac{k \alpha }{2\sqrt{c}}\right],\label{eq:barrier value new coordinate derivative} \\
\difftwo{\tau} g_{\vec{\xi}}(\tau) &= \rho\left[(\beta-\alpha^2)\left(\frac{k}{2\sqrt{c}}-3\alpha\right) - \frac{k^2\alpha^2\rho}{4c^{3/2}}\right], \label{eq:barrier value new coordinate second derivative}
\end{align}
\end{subequations}
where $\tau$ dependency is omitted for brevity. Moreover, for all $\vec{\xi} \in \mathcal{B}_{+}^\star \subset \mathcal{B}$ the following inequalities hold for all $\tau \in [0,\delta]$:
\begin{subequations}\label{eq:properties needed}
\begin{align}
     &\beta(\tau) - \alpha(\tau)^2 \geq 0, \quad (\text{Cauchy–Schwarz}) \label{eq:cond1}\\
     &\alpha(\tau) \rho(\tau) + \sqrt{c(\tau)} - \sqrt{\epsilon} \geq 0, \quad \text{(\eqref{eq:barrier value new coordinate} and $\vec{\xi} \in \mathcal{B}^\star$)}\\  
    &c(\tau) > \epsilon > 0, \; \beta(\tau)> 0, \; \rho(\tau) > r >0. 
\end{align}
\end{subequations}
Hence, to prove SSOCS we prove that the three mutually exclusive cases (i)-(iii) in \eqref{eq:ssocs cases} hold. The proof is identical for all cases. Namely, for all cases (1)-(3) in \eqref{eq:ssocs cases} the solution $w^\star_{\xi}$ should satisfy KKT system \eqref{eq:kkt}, which yields $\diff{\tau}g_{\vec{\xi}}(\tau^{\star}_{\vec{\xi}}) = 0$. Letting for convenience $\beta^{\star} := \beta(\tau_{\vec{\xi}}^{\star})$, $c^{\star} := c(\tau_{\vec{\xi}}^{\star})$, and $\alpha^{\star} := \alpha(\tau_{\vec{\xi}}^{\star})$, then
$
\diff{\tau}g_{\vec{\xi}}(\tau^{\star}_{\vec{\xi}}) = 0 \Rightarrow (\beta^{\star} - \alpha^{\star 2}) = - \frac{k \alpha^{\star} }{2\sqrt{c^{\star}}}.
$
Since $\beta^\star-\alpha^{\star2}\geq0$ and $c^\star>0$, this implies $\alpha^\star\leq0$. At the same time, selecting $\alpha^* = 0$ yields $\diff{\tau}g_{\vec{\xi}}(\tau^{\star}_{\vec{\xi}}) = \rho \beta^{\star}>0$, which contradicts $\diff{\tau} g_{\vec{\xi}}(\tau^{\star}_{\vec{\xi}}) = 0$. Thus we conclude that $\alpha^* <0$. Replacing $(\beta^{\star} - \alpha^{\star 2}) = - \frac{k \alpha^{\star} }{2\sqrt{c^{\star}}}$ in \eqref{eq:barrier value new coordinate second derivative} we have 
\begin{equation}\label{eq:second derivative}
\difftwo{\tau} g_{\vec{\xi}}(\tau^{\star}_{\vec{\xi}}) = \frac{\alpha^{\star} k \rho}{4 c^{\star\; 3/2}} \cdot (6 \alpha^{\star} c^{\star} - k (\alpha^{\star} \rho^{\star} + \sqrt{c^{\star}})).
\end{equation}
recalling that $(\alpha^{\star} \rho^{\star} + \sqrt{c^{\star}}) > \sqrt{\epsilon} >0$, then replacing $\alpha^* <0$ yields $\difftwo{\tau} g_{\vec{\xi}}(\tau^{\star}_{\vec{\xi}})>0$, proving the SSOCS at the solution. 

\textbf{Differentiability} By the fact that $T^{\star} = \{\tau^\star\}$ is a singleton, and that SSOCS and LIQC hold, Danskin's Theorem \cite[Thm 4.6]{pacaud2025sensitivity} ensures the differentiability of $b^{\star}(\vec{\xi})$ with 
$$
\diff{\vec{\xi}} b^{\star}(\vec{\xi}) = \diff{\vec{\xi}}b (\phi(\tau_{\vec{\xi}}^{\star},\vec{\xi}) = \diff{\vec{\xi}}b \diff{\vec{\xi}} \phi(\tau_{\vec{\xi}}^{\star},\vec{\xi}) = \diff{\vec{\xi}}b \;  \Phi(\tau_{\vec{\xi}}^{\star},\vec{\xi})
$$
where $\diff{\vec{\xi}} \phi(\tau_{\vec{\xi}}^{\star},\vec{\xi}) := \Phi(\tau_{\vec{\xi}}^{\star},\vec{\xi})$ is the sensitivity of the unforced solution with respect to the initial condition $\vec{\xi}$ evaluated at time $\tau^\star$ as defined in \eqref{eq:state transition matrix} \cite[Ch. 3.3]{khalil2002nonlinear}. \hfill 
\end{proof}

\addtolength{\textheight}{-12cm}   





\bibliographystyle{ieeetr}
\bibliography{biblio.bib}

\end{document}